\newcommand{\abs}[1]{\left\vert #1 \right\vert}
\DeclareMathOperator{\E}{\mathbb{E}}
\newcommand{\naturals}{\mathbb{N}}
\newcommand{\norm}[1]{\left\Vert #1 \right\Vert}
\newcommand{\Normal}{\mathcal{N}}
\newcommand{\reals}{\mathbb{R}}
\newcommand{\T}{\mathsf{T}}
\DeclareMathOperator{\Var}{Var}
\newtheorem{lemma}{Lemma}
\newtheorem{theorem}{Theorem}
\newtheorem{definition}{Definition}
\tikzstyle{block} = [draw, rectangle, text centered, text width=5em, minimum height=3em]
\tikzstyle{arrow} = [draw, -latex]
\tikzstyle{line} = [draw]
\title{\textbf{Adaptive Leader-Follower Formation Control and Obstacle Avoidance via Deep Reinforcement Learning}}
\author{Yanlin Zhou$^{\dagger, 1}$, Fan Lu$^{\dagger, 1}$, George Pu$^{\dagger, 1}$, Xiyao Ma$^{1}$, \\Runhan Sun$^{2}$, Hsi-Yuan Chen$^{3}$, Xiaolin Li$^{4}$ and Dapeng Wu$^{1}$ 
\thanks{
$^{\dagger}$ Equal contribution.}
\thanks{
This work was supported by National Science Foundation (CNS-1624782, CNS-1747783) and Industrial Members of NSF Center for Big Learning (CBL).
}
\thanks{$^{1}$ National Science Foundation Center for Big Learning, Large-scale Intelligent Systems Laboratory, Department of Electrical and Computer Engineering, University of Florida, Gainesville, FL, USA
\texttt{zhou.y@ufl.edu};
\texttt{fan.lu@ufl.edu};
\texttt{pu.george@ufl.edu};
\texttt{maxiy@ufl.edu};
\texttt{dpwu@ieee.org}
}%
\thanks{$^{2}$ Department of Mechanical and Aerospace Engineering, University of Florida, Gainesville, FL, USA 
\texttt{runhansun@ufl.edu}; 
}%
\thanks{$^{3}$ Amazon Robotics, North Reading
\texttt{hychen@ufl.edu}}
\thanks{$^{4}$ AI Institute, Tongdun Technology
\texttt{xiaolin.li@tongdun.net}
}%
}
\begin{document}

\maketitle

\begin{abstract}
We propose a deep reinforcement learning (DRL) methodology for the tracking, obstacle avoidance, and formation control of nonholonomic robots.
By separating vision-based control into a perception module and a controller module, we can train a DRL agent without sophisticated physics or 3D modeling.
In addition, the modular framework averts daunting retrains of an image-to-action end-to-end neural network, and provides flexibility in transferring the controller to different robots.
First, we train a convolutional neural network (CNN) to accurately localize in an indoor setting with dynamic foreground/background.
Then, we design a new DRL algorithm named Momentum Policy Gradient (MPG) for continuous control tasks and prove its convergence.
We also show that MPG is robust at tracking varying leader movements and can naturally be extended to problems of formation control.
Leveraging reward shaping, features such as collision and obstacle avoidance can be easily integrated into a DRL controller.

\end{abstract}

\section{Introduction}

The traditional control problem of dynamical systems with nonholonomic constraints is a heavily researched area because of its challenging theoretical nature and its practical use. 
A wheeled mobile robot (WMR) is a typical example of a nonholonomic system. 
Researchers in the control community have targeted problems in WMR including setpoint regulation, tracking \cite{dixon2000robust}, and formation control \cite{ren2008consensus}.
Due to the nature of these problems, the control law design involves sophisticated mathematical derivations and assumptions \cite{han2019integrated}.

One of these problems is image-based localization, which involves an autonomous WMR trying to locate its camera pose with respect to the world frame \cite{piasco2018survey}.
It is an important problem in robotics since many other tasks including navigation, SLAM, and obstacle avoidance require accurate knowledge of a WMR's pose \cite{piasco2018survey}. 
PoseNet adopts a convolutional neural network (CNN) for indoor and outdoor localization \cite{kendall2015posenet}.
Using end-to-end camera pose estimation, the authors sidestep the need for feature engineering.
This method was later extended by introducing uncertainty modeling for factors such as noisy environments, motion blur, and silhouette lighting \cite{kendall2016modelling}.
Recently, there is an emerging trend of localization in dynamic indoor environments \cite{li2018indoor, lin2018deep}.

Given accurate localization methods, various vision-based control tasks such as leader following can be accomplished.
The leader-following problem is defined as an autonomous vehicle trying to follow the movement of a leader object \cite{mutz2017following}.
The form of a leader is not limited to an actual robot, but can also include virtual markers \cite{olfati2004flocking}, which can serve as a new method for controlling real robots.

However, a virtual leader is able to pass through a territory that a real follower cannot, which require the follower to perform obstacle avoidance \cite{chin1993path}.
The major challenge of applying classical control methods to the obstacle avoidance problem is having controllers correctly respond to different obstacles or unreachable areas. \cite{dixon2004adaptive,fibla2010allostatic}. A common approach is to design an adaptive or hybrid controller by considering all the cases, which is time-consuming \cite{chin1993path}.

Formation control is a leader-follower problem but with multiple leader/followers.
It includes the additional challenge of collision avoidance among group members \cite{hong2008distributed}.
Two major approaches include graph-based solutions with interactive topology and optimization-based solutions\cite{ge2018survey}.
However, the nonlinearity of nonholonomic robots adds to the challenges of modeling robots and multi-agent consensus.
Deep RL solves these problems, but training an agent requires complex environmental simulation and physics modeling of robots \cite{bruce2018learning, liu2018formation}.

\begin{figure}[h]
    \centering
    \includegraphics[width=0.9\linewidth]{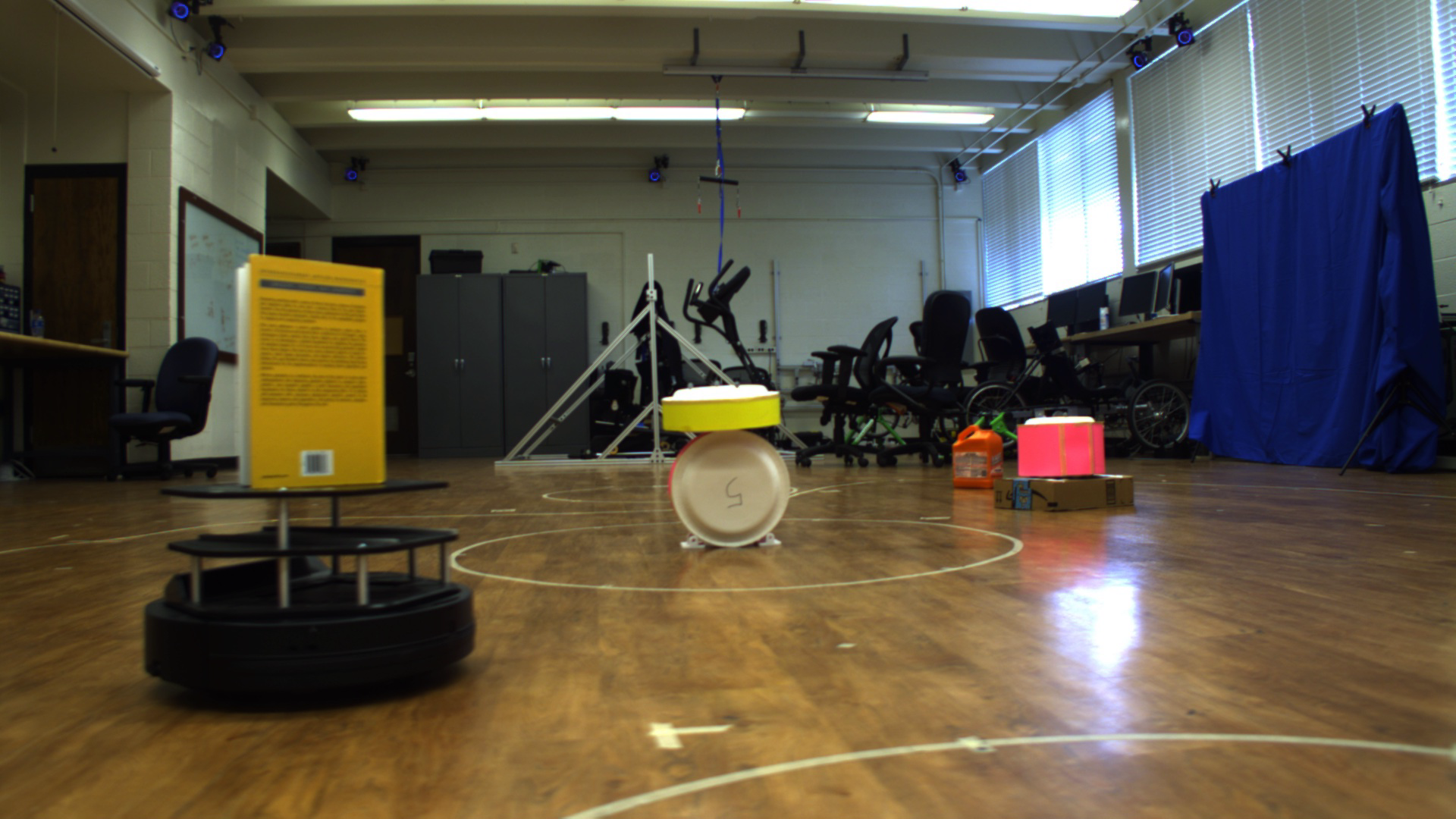}
    \caption{A screenshot from our custom dataset.}
    \label{fig:env_sc}
\end{figure}

In this paper, modular localization circumvents the challenge of not having an actual WMR's physics model or time-consuming environmental simulation.
While end-to-end training from images to control has become popular \cite{krajnik2018navigation, DBLP:journals/corr/abs-1809-10124}, partitioning the problem into two steps---localization and control---enables the flexibility of retraining the controller while reusing the localization module.
The flowchart of our modular design is shown in Figure~\ref{fig:modules}.
The first phase of this work is to apply a CNN model called residual networks (ResNets) \cite{DBLP:journals/corr/HeZRS15} to vision-based localization.
We focus on indoor localization which includes challenges such as dynamic foreground, dynamic background, motion blur, and various lighting.
The dynamic foreground and background are realized by giving intermittent views of landmarks.
A picture of our training and testing environment is shown in Figure~\ref{fig:env_sc}.
We show that our model accurately predicts the position of robots, which enables DRL without need for a detailed 3D simulation.

To replace traditional controllers that usually involve complex mathematical modeling and control law design, we leverage DRL for several control problems including leader tracking, formation control, and obstacle avoidance.
We propose a new actor-critic algorithm called Momentum Policy Gradient (MPG), an improved framework of TD3 that reduces under/overestimation \cite{fujimoto2018addressing}.
We theoretically and experimentally prove MPG's convergence and stability.
Furthermore, the proposed algorithm is efficient at solving leader-following problems with regular and irregular leader trajectories.
MPG can be extended to solve the collision avoidance and formation control problems by simply modifying the reward function.

\begin{figure}[h]
    \centering
    \begin{tikzpicture}[node distance=1cm, auto]
        \node(img){Image $I$};
        \node(cnn)[below=of img, block, fill=red!30]{ResNet};
        \node(con)[below=of cnn, block, fill=cyan!30]{Controller};
        \node(act)[below=of con]{Action $(v_x, v_y)$};
        \path[arrow] (img) -- (cnn);
        \path[arrow] (cnn) -- node{robot pose $(x, y, \theta)$} (con);
        \path[arrow] (con) -- (act);
        
        \node(sup)[left=of cnn]{Supervised learning};
        \path[line] (sup) -- (img);
        \path[line] (sup) -- (con);
        \node(rl)[left=of con]{Reinforcement learning};
        \path[line] (rl) -- (cnn);
        \path[line] (rl) -- (act);
    \end{tikzpicture}
    \caption{Information flow between modules. Blocks are neural networks.}
    \label{fig:modules}
\end{figure}
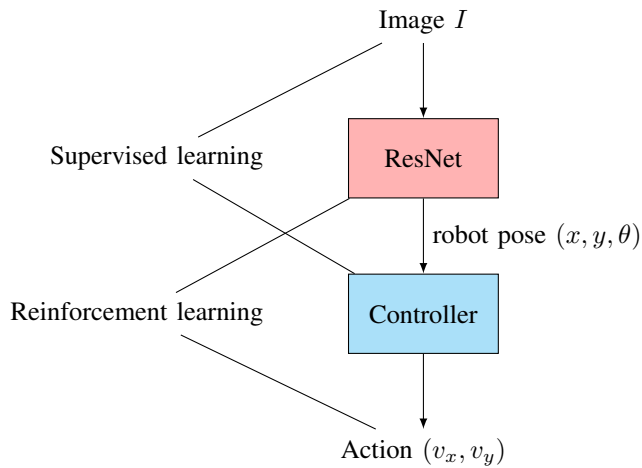

In summary, our contribution is four-fold.
\begin{itemize}
    \item We propose Momentum Policy Gradient for continuous control tasks that combats under/overestimation.
    \item A modular approach that circumvents the need for complex modelling work or control law design.
    \item Robust image-based localization achieved using CNNs.
    \item Natural extensions from the single leader-follower tracking problem to collision/obstacle avoidance and formation control with reward shaping.
\end{itemize}

\section{Mobility of Wheeled Robots}
We consider the problem of forward kinematics for WMRs that has been extensively studied over the decades.
A WMR collects two inputs: angular and linear velocities.
The velocity inputs are then fed into on-board encoders to generate torque for each wheel.
Due to the different sizes of robots, number of wheels, and moving mechanisms, robots can be classified into holonomic agents (\textit{i.e.,} omni-directional Mecanum wheel robots) \cite{ilon1975wheels} and nonholonomic agents (\textit{i.e.,} real vehicles with constrained moving direction and speed) \cite{bryant2006geometry}.
In this paper, we consider nonholonomic agents.

Ideally, angular and linear velocities are the action components of a DRL control agent.
However, these two action spaces have very different scales and usually cause size-asymmetric competition \cite{weiner1990asymmetric}.
We found out that training a DRL agent with angular and linear velocities as actions converges slower than our methods presented below.
Since there is no loss in degree of freedom, it is sufficient to use scalar velocities in $x$ and $y$ axes similar to the work done in \cite{ren2008consensus}.

Let $(x_i, y_i)$ be the Cartesian position, $\theta_i$ as orientation, and $(v_i, w_i)$ denote linear and angular velocities of the WMR agent $i$.
The dynamics of each agent is then
\begin{equation}\label{eqn:xydecomp}
    \Dot{x}_i = v_i\cos(\theta_i), \quad \Dot{y}_i = v_i\sin(\theta_i), \quad \Dot{\theta}_i = w_i
\end{equation}
The nonholonomic simplified kinematic Equation~(\ref{eqn:xynonholo}) can then be derived by linearizing (\ref{eqn:xydecomp}) with respect to a fixed reference point distance $d$ off the center of the wheel axis $(x'_i, y'_i)$ of the robot, where $x'_i = x_i + d_i\cos{\theta}, y'_i = y_i + d_i \sin{\theta}$.

Following the original setting $d = 0.15$ meters in \cite{ren2008consensus}, it is trivial to transfer velocity signals as actions used in nonholonomic system control.

\begin{equation}\label{eqn:xynonholo}
    \begin{bmatrix} v_i \\ w_i \end{bmatrix}
    =
    \begin{bmatrix}
        \cos{\theta_i} & \sin{\theta_i} \\
        -(1/d)\sin{\theta_i} &  -(1/d)\cos{\theta_i} 
    \end{bmatrix}
    \begin{bmatrix} a_{x'_i} \\ a_{y'_i} \end{bmatrix}
\end{equation}
where $a_{x'_i}$ and $a_{y'_i}$ are input control signals to each robot $i$.

In addition, other differential drive methods such as Instantaneous Center of Curvature (ICC) can be used for nonholonomic robots.
However, compared to (\ref{eqn:xynonholo}), ICC requires more physical details such as distance between the centers of the two wheels and eventually only works for two-wheeled robots \cite{dudek2010computational}.
Meanwhile, decomposing velocity dynamics to velocities on x and y axes can be reasonably applied to any WMRs.

\section{Localization}

The localization module focuses on the problem of estimating position directly from images in a noisy environment with both dynamic background and foreground.
Several landmarks (\textit{i.e.,} books, other robots) were placed so as to be visible in the foreground.
Using landmarks as reference for indoor localization tasks has proven to be successful for learning-based methods \cite{lee2018amid}.
As the WMR moves around the environment, a camera captures only intermittent view of landmarks and their position as the frame changes.

Overall, data was gathered from $3$ types of robot trajectories (\textit{i.e.,} regular, random, whirlpool) with multiple trials taking place at different times of day when the background changes greatly and lighting conditions vary from morning, afternoon, and evening.
The image dataset and ground truth pose of the agent was collected using a HD camera and a motion capture system, respectively.
As the WMR moved along a closed path, images were sampled at rate of $30$ Hz, and the ground truth pose of the camera and agent at a rate of $360$ Hz.
Data collection was performed at Nonlinear Controls and Robotics Lab.
The camera used for recording images is placed on a TurtleBot at a fixed viewing angle.
An example from our dataset is displayed in Figure \ref{fig:env_sc}.

\begin{figure}[H]
    \centering
    \includegraphics[width=0.9\linewidth]{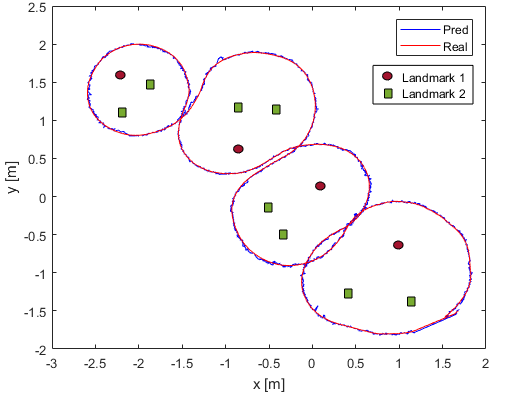}
    \caption{Example snail trajectory with predicted pose.}
    \label{fig:resnet}
\end{figure}

A ResNet-50 model \cite{DBLP:journals/corr/HeZRS15} with a resized output layer is trained from scratch on the dataset.
Residual networks contain connections between layers of different depth to improve backpropagation.
This eliminates the vanishing gradient problems encountered when training very deep CNNs.
The ResNet predicts the robots current 2D position, orientation, and the distance to nearby landmarks.
Distance to landmarks was not used in later modules, but helps focus attention onto the landmarks, which is a more reliable indicator of current pose, instead of any background changes between images.

We claim that our approach is robust enough to accurately predict a robot's pose for even very complex trajectories.
These paths can have multiple points of self-intersection.
Furthermore, ResNet based localization works well even with a dynamic foreground (multiple landmarks) and a dynamic background.
Different lighting conditions and changes in background objects between trials do not affect the accuracy.
The error rates are given in Table \ref{tab:res-error} for the robot's 2D coordinates and 4D quaternion orientation.
They are all less than $1\%$.
Figure \ref{fig:resnet} shows an example predicted and motion captured poses as well as landmarks; there is almost no difference between the two. 

\begin{table}[h]
\centering
    \caption{ResNet52 Pose Predition Error}
    \label{tab:res-error}
    \begin{tabular}{|c||c||c||c||c||c|}
        \hline
        x (\%) & y (\%) & q1 (\%) & q2 (\%) & q3 (\%) & q4 (\%) \\ 
        \hline
        $0.439$ & $0.1191$ & $0.7462$ & $0.3199$  & $0.1673$  & $0.152$    \\ 
        \hline
    \end{tabular}
\end{table}

\section{Momentum Policy Gradient}

\begin{algorithm}[H]
    \begin{algorithmic}[1]
    \caption{Momentum Policy Gradient}\label{alg:mpg}
    \State Initialize critic networks $Q_{\theta_1}$, $Q_{\theta_2}$, and actor network $\pi_\phi$ with random parameters $\theta_1, \theta_2, \phi$
    \State Initialize target networks $Q_{\hat{\theta}_1}, Q_{\hat{\theta}_2}$ with $\hat{\theta}_1 = \theta_1, \hat{\theta}_2 = \theta_2$
    \State Create empty experience replay buffer $E$
    \For{$t = 1$ \textbf{to} $T$}
        \State Take action $a = \pi_\phi(s) + \Normal(0, v_{explore})$
        \State $v_{explore} = \min(\lambda \cdot v_{explore}, v_{min})$
        \State Get next state $s'$, reward $r$ from environment
        \State Push $(s, a, s', r)$ into $E$
        \State Sample mini-batch of $N$ transitions from $E$
        \State Set $\Delta_{last} = 0$
        \For{$i = 1$ \textbf{to} $I$}
            \State $a' \gets \pi_\phi(a) + \Normal(0, v_{train})$
            \State $\Delta_{adj} \gets \frac{1}{2} (\Delta_{last} + \abs{Q_{\hat{\theta}_1}(s', a') - Q_{\hat{\theta}_2}(s', a')})$
            \State $\Delta_{last} \gets \abs{Q_{\hat{\theta}_1}(s', a') - Q_{\hat{\theta}_2}(s', a')}$
            \State $q \gets \max(Q_{\hat{\theta}_1}(s', a'), Q_{\hat{\theta}_2}(s', a')) - \Delta_{adj}$
            \State $y \gets r + \gamma q$
            \State $L^C \gets \frac{1}{N} \sum_{batch} \sum_{i=1,2} (Q_{\theta_i}(s', a') - y)^2$
            \State Minimize $L^c$
            \If{$i \bmod F = 0$}
                \State $L^A \gets$ average of $-Q_{\theta_1}(s, \pi_\phi(a))$
                \State Minimize $L^a$
                \State $\theta_1 \gets \tau\theta_1 + (1 - \tau)\hat{\theta}_1$
                \State $\theta_2 \gets \tau\theta_2 + (1 - \tau)\hat{\theta}_2$
            \EndIf
        \EndFor
    \EndFor
    \end{algorithmic}
\end{algorithm}

Since nonholonomic controllers have a continuous action space, we design our algorithm based on the framework established by DDPG \cite{lillicrap2015continuous}.
There are two neural networks: a policy network predicts the action $a = \pi_\phi(s)$ given the state $s$, a Q-network $Q_\theta$ estimates the expected cumulative reward for each state-action pair.
\begin{equation}
    Q_\theta(s, a) \approx \E\left[ \sum_{t=0}^{\infty} \gamma^t r_t \bigg| s_0 = s, a_0 = a \right]
\end{equation}
The Q-network is part of the loss function for the policy network.
For this reason, the policy network is called the actor and the Q-network is called the critic.
\begin{equation}
    L^A = Q_\theta(s, \pi_\phi(s))
\end{equation}
The critic itself is trained using a Bellman equation derived loss function.
\begin{equation}
    L^C = ( Q_\theta(s, a) - [ r + \gamma Q_{\hat{\theta}}(s', \pi_\phi(s')) ] )^2
\end{equation}
However, this type of loss leads to overestimation of the true total return \cite{franccois2018introduction}.
TD3 fixes this by using two Q-value estimators $Q_{\theta_1}, Q_{\theta_2}$ and taking the lesser of the two \cite{fujimoto2018addressing}.
\begin{equation}
    y = r + \gamma \min_{i=1,2} Q_{\theta_i} (s', \pi_{\phi_1}(s'))
\end{equation}
Note that this is equivalent to taking the maximum, and then subtracting by the absolute difference.
However, always choosing the lower value brings underestimation and higher variance \cite{fujimoto2018addressing}.

To lower the variance in the estimate, inspired by the momentum used for optimization in \cite{zhang2015deep}, we propose Momentum Policy Gradient illustrated in Algorithm~\ref{alg:mpg} which averages the current difference with the previous difference $\Delta_{last}$.
\begin{align}
    q &= \max(Q_{\hat{\theta}_1}(s', a'), Q_{\hat{\theta}_2}(s', a')) - \Delta_{adj} \\
    \Delta_{adj} &= \frac{1}{2} \left( \abs{Q_{\theta_1}(s', a') - Q_{\theta_2}(s', a')} + \Delta_{last} \right)
\end{align}
This combats overestimation bias more aggressively than just taking the minimum of $Q_{\theta_1}, Q_{\theta_2}$.
Moreover, this counters any over-tendency TD3 might have towards underestimation.
Because neural networks are randomly initialized, by pure chance $\abs{Q_{\theta_1}(s', a') - Q_{\theta_2}(s', a')}$ could be large.
However, it is unlikely that $\Delta_{last}$ and $\abs{Q_{\theta_1}(s', a') - Q_{\theta_2}(s', a')}$ are both large as they are computed using different batches of data.
Thus $\Delta_{adj}$ has a lower variance than $\abs{Q_{\theta_1}(s', a') - Q_{\theta_2}(s', a')}$.

In the case of negative rewards, the minimum takes the larger $Q_{\theta_i}(s', a')$ in magnitude.
This will actually encourage overestimation (here the estimates trend toward $-\infty$).

Before proving the convergence of our algorithm, we first require a lemma proved in  \cite{singh2000convergence}.

\begin{lemma}
	Consider a stochastic process $(\alpha_t, \Delta_t, F_t), t \in \naturals$ where $\alpha_t, \Delta_t, F_t\colon X \to \reals$ such that
	\begin{equation*}
		\Delta_{t+1}(x) = [ 1 - \alpha_t(x) ] \Delta_t(x) + \alpha_t(x) F_t(x)
	\end{equation*}
	for all $x \in X, t \in \naturals$. Let $(P_t)$ be a sequence of increasing $\sigma$-algebras such that $\alpha_t, \Delta_t, F_{t-1}$ are $P_t$-measurable.
	If
	\begin{enumerate}
		\item the set $X$ is finite
		\item $0 \leq \alpha_t(x_t) \leq 1$, $\sum_t \alpha_t(x_t) = \infty$, but $\sum_t \alpha_t^2(x_t) < \infty$ with probability 1
		\item $\norm{\E[F_t | P_t]} \leq \kappa\norm{\Delta_t} + c_t$ where $\kappa \in [0, 1)$ and $c_t$ converges to 0 with probability 1
		\item $\Var[F_t(x) | P_t] \leq K(1 + \norm{\Delta_t})^2$ for some constant $K$,
	\end{enumerate}
	Then $\Delta_t$ converges to 0 with probability 1.
\end{lemma}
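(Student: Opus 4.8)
The plan is to treat this as a stochastic approximation (Robbins--Monro) result and reduce it to a standard contraction-plus-noise argument. Since condition (1) makes $X$ finite, the norm $\norm{\cdot}$ is a maximum over finitely many coordinates, so the growth and contraction bounds couple only through this scalar quantity and I can reason about the coupled recursion rather than infinitely many components. The first move is to split the driving term into its conditional mean and a martingale-difference noise: write $F_t(x) = \E[F_t(x) \mid P_t] + \eta_t(x)$, where $\E[\eta_t(x) \mid P_t] = 0$ and, by condition (4), $\E[\eta_t(x)^2 \mid P_t] \le K(1 + \norm{\Delta_t})^2$. Correspondingly I would decompose $\Delta_t = \delta_t + w_t$, where $\delta_t$ tracks the deterministic contraction governed by condition (3) and $w_t$ accumulates the zero-mean noise through the same recursion $w_{t+1}(x) = [1 - \alpha_t(x)] w_t(x) + \alpha_t(x) \eta_t(x)$.

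Next I would prove the key auxiliary fact that the noise iterate $w_t \to 0$ almost surely. Expanding $w_{t+1}(x)^2$ and taking conditional expectations, the cross term vanishes because $\eta_t$ is conditionally zero-mean, leaving a recursion in which the step-size conditions of (2) do the work: $\sum_t \alpha_t(x) = \infty$ guarantees the old noise is forgotten, since the accumulated $(1-\alpha_t)$ factors drive it to zero, while $\sum_t \alpha_t^2(x) < \infty$ together with the variance bound keeps the injected-noise contribution summable, so a supermartingale (Robbins--Siegmund) convergence argument forces $w_t \to 0$. The subtlety here is that the variance bound in (4) carries the factor $(1 + \norm{\Delta_t})^2$, so this step cannot be run in isolation: I must first secure almost-sure boundedness of $\norm{\Delta_t}$.

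Establishing that boundedness is the part I expect to be the main obstacle, since it is the familiar chicken-and-egg difficulty of stochastic approximation --- the variance that controls the noise is itself allowed to grow with the quantity we are trying to bound. The standard resolution I would adopt is a rescaling/truncation argument: suppose for contradiction that $\norm{\Delta_t}$ is unbounded, renormalize the iterate by its running maximum, and show that the rescaled process obeys a recursion with a strict contraction (from $\kappa < 1$) and vanishing relative noise, which is incompatible with the running maximum growing without bound. This pins $\norm{\Delta_t} \le M$ for some finite random $M$ almost surely, after which condition (4) reduces to a genuinely bounded conditional variance and the auxiliary noise lemma applies verbatim.

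Finally I would close the loop with an iterated-contraction (``shrinking box'') argument. Once $w_t \to 0$ and $c_t \to 0$, condition (3) says the mean update contracts $\norm{\Delta_t}$ by a factor approaching $\kappa < 1$ up to a vanishing additive error. Choosing any $\beta$ with $\kappa < \beta < 1$, I would show that if $\norm{\Delta_t} \le D$ for all large $t$, then eventually $\norm{\Delta_t} \le \beta D$, because the contraction dominates once the noise and $c_t$ terms have shrunk below the relevant threshold. Iterating this over the sequence $D, \beta D, \beta^2 D, \dots$ and using $\beta^k \to 0$ yields $\Delta_t \to 0$ almost surely, completing the proof. The finiteness of $X$ is used throughout to pass from the componentwise recursions to the norm and to guarantee that ``eventually'' statements holding per coordinate hold simultaneously.
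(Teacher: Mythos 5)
The paper does not actually prove this lemma; it is imported verbatim as a known result from Singh et al.\ (the paper's reference \texttt{singh2000convergence}, with the argument originating in Jaakkola, Jordan and Singh's work on stochastic iterative dynamic programming), so there is no internal proof to compare against. Your sketch reconstructs the standard proof of that classical stochastic-approximation lemma, and the skeleton is the right one: split $F_t$ into its conditional mean plus a martingale-difference term $\eta_t$, show the noise iterate $w_{t+1}(x) = [1-\alpha_t(x)]w_t(x) + \alpha_t(x)\eta_t(x)$ vanishes almost surely using $\sum_t \alpha_t = \infty$ and $\sum_t \alpha_t^2 < \infty$, secure almost-sure boundedness of $\norm{\Delta_t}$ by a rescaling argument to break the circularity in condition (4), and finish with the shrinking-box iteration over $D, \beta D, \beta^2 D, \dots$ for $\kappa < \beta < 1$. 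Two places where the sketch is thinner than a complete proof: first, the boundedness step, which you correctly identify as the crux, is stated as a plan rather than executed --- showing that the renormalized process obeys the same recursion with uniformly controlled relative noise is where most of the technical work in the original papers lives; second, in the decomposition $\Delta_t = \delta_t + w_t$, condition (3) bounds $\E[F_t \mid P_t]$ in terms of $\norm{\Delta_t}$ rather than $\norm{\delta_t}$, so the contraction on the mean part is coupled to $w_t$, and the shrinking-box step must explicitly absorb $\norm{w_t}$ into the vanishing additive error; you gesture at this coupling but it needs an inequality, not a remark. Neither point is a wrong turn: filled in, your outline is the accepted proof of the lemma, and it supplies an argument the paper itself omits.
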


The theorem and proof of MPG's convergence is borrowed heavily from those for Clipped Double Q-learning \cite{fujimoto2018addressing}.

\begin{theorem}[Convergence of MPG update rule]
	Consider a finite MDP with $0 \leq \gamma < 1$ and suppose
	\begin{enumerate}
		\item each state-action pair is sampled an infinite number of times
		\item Q-values are stored in a lookup table
		\item $Q, Q'$ receive an infinite number of updates
		\item the learning rates satisfy $0 < \alpha_t(s_t, a_t) < 1$, $\sum_t \alpha_t(s_t, a_t) = \infty$, but $\sum_t \alpha_t^2(s_t, a_t) < \infty$ with probability 1
		\item $\Var[r(s, a)] < \infty$ for all state-action pairs.
	\end{enumerate}
	Then Momentum converges to the optimal value function $Q^*$.
\end{theorem}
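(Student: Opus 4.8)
The plan is to recast the tabular MPG update into the canonical stochastic-approximation form $\Delta_{t+1}(x) = [1-\alpha_t(x)]\Delta_t(x) + \alpha_t(x)F_t(x)$ of the Lemma and then discharge its four hypotheses, the contraction estimate (hypothesis~3) being where the momentum correction must be tamed. The first move is purely algebraic: since $\max(u,v) = \tfrac{1}{2}(u+v+\abs{u-v})$, the target increment $q = \max(Q_{\theta_1}(s',a'),Q_{\theta_2}(s',a')) - \tfrac{1}{2}\bigl(\abs{Q_{\theta_1}(s',a')-Q_{\theta_2}(s',a')} + \Delta_{last}\bigr)$ collapses to $q = \tfrac{1}{2}\bigl(Q_{\theta_1}(s',a') + Q_{\theta_2}(s',a')\bigr) - \tfrac{1}{2}\Delta_{last}$, i.e.\ the plain average of the two critics minus the momentum term. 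This clean form is what lets the remainder of the argument parallel the Clipped Double Q-learning proof.

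Second, I would show the two critics merge. Because both $Q_{\theta_1}$ and $Q_{\theta_2}$ are regressed onto the single shared target $y = r+\gamma q$, the gap $\Delta^{12}_t(s,a) := Q_{\theta_1}^t(s,a)-Q_{\theta_2}^t(s,a)$ satisfies $\Delta^{12}_{t+1}(s,a) = [1-\alpha_t(s,a)]\Delta^{12}_t(s,a)$ on every updated pair. Since each pair is visited infinitely often and $\prod_t[1-\alpha_t(s,a)] = 0$ under the step-size assumption, we get $\Delta^{12}_t \to 0$ with probability~1; as $\Delta_{last}$ is nothing but a previously recorded value of $\abs{\Delta^{12}}$, it too tends to $0$.

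Third, set $\Delta_t := Q_{\theta_1}^t - Q^*$ so that $F_t(s,a) = r + \gamma q - Q^*(s,a)$. Taking $a' = \arg\max_a Q_{\theta_1}(s',a)$ in the tabular reduction, I would write $F_t$ as the ordinary Q-learning error $r + \gamma\max_{a'}Q_{\theta_1}(s',a') - Q^*(s,a)$ plus the residual $\gamma\bigl(q - \max_{a'}Q_{\theta_1}(s',a')\bigr) = -\tfrac{\gamma}{2}\Delta^{12}(s',a') - \tfrac{\gamma}{2}\Delta_{last}$. The conditional expectation of the first piece contracts with modulus $\gamma$ by the non-expansiveness of $\max$ together with Bellman optimality of $Q^*$, while the residual is absorbed into $c_t$; since both $\Delta^{12}$ and $\Delta_{last}$ vanish by the second step, $c_t\to 0$ and hypothesis~3 holds with $\kappa=\gamma<1$. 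Hypothesis~4 follows because, conditioned on $P_t$, the critic tables are fixed, so the only randomness in $F_t$ lives in $r$ and the successor state; with $\Var[r]<\infty$ this yields $\Var[F_t\mid P_t]\le K(1+\norm{\Delta_t})^2$. Hypotheses~1 and~2 of the Lemma are immediate from the finite-MDP and step-size assumptions, so the Lemma delivers $\Delta_t\to 0$, i.e.\ $Q_{\theta_1}\to Q^*$, and $Q_{\theta_2}\to Q^*$ follows from coalescence.

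The step I expect to be the main obstacle is the contraction bound of hypothesis~3, specifically certifying that the momentum term $\Delta_{last}$ decays rather than silently re-injecting bias. The difficulty is bookkeeping: $\Delta_{last}$ is carried over from an earlier inner iteration computed on a different minibatch, so its measurability with respect to the filtration $(P_t)$ and its decay are only inherited indirectly from the coalescence $\norm{\Delta^{12}_t}\to 0$. Pinning down that every term feeding $c_t$ is a past inter-critic gap---and is therefore squeezed to zero by the second step---is the linchpin of the whole proof; everything else is a faithful transcription of the Clipped Double Q-learning argument.
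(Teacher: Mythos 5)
Your proposal is correct and follows essentially the same route as the paper's proof: recast the update in the form of the stochastic-approximation lemma, show the two critics coalesce via $\Delta^{12}_{t+1}=[1-\alpha_t]\Delta^{12}_t\to 0$ because they share a common target, and decompose $F_t$ into the standard Q-learning error plus a residual (the paper's $\gamma b_t$) that vanishes once the inter-critic gap and the lagged term do. Your opening identity $\max(u,v)-\tfrac12\abs{u-v}=\tfrac12(u+v)$ is a cleaner way of writing the paper's $m_t-\Delta^{adj}_t$, and your explicit check of the variance condition is slightly more careful than the paper, but these are presentational differences, not a different argument.
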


\begin{proof}
	Let $X = \mathcal{S} \times \mathcal{A}$, $\Delta_t = Q_t - Q^*$, and $P_t = \{Q_k, Q'_k, s_k, a_k, r_k, \alpha_k\}_{k=1}^t$.
	Conditions 1 and 2 of Lemma 1 are satisfied. By the definition of Momentum Policy Gradient,
	\begin{align*}
		\Delta^{adj}_t &= \frac{1}{2}( |Q_t(s_t, a_t) - Q'_t(s_t, a_t)| + \\
		&\qquad |Q_{t-1}(s_{t-1}, a_{t-1}) - Q'_{t-1}(s_{t-1}, a_{t-1})| ) \\
		y_t &= r_t + \gamma ( m_t - \Delta^{adj}_t ) \\
		Q_{t+1}(s_t, a_t) &= [ 1 - \alpha_t(s_t, a_t) ] Q_t(s_t, a_t) + \alpha_t(s_t, a_t) y_t
	\end{align*}
	where $m_t = \max\{ Q_t(s_t, a_t), Q'_t(s_t, a_t) \}$.
	Then
	\begin{align*}
		\Delta_{t+1}(s_t, a_t) &= [ 1 - \alpha_t(s_t, a_t) ] [ Q_t(s_t, a_t) - Q^*(s_t, a_t) ] \\
		&\quad \ + \alpha_t(s_t, a_t) [ y_t - Q^*(s_t, a_t) ] \\
		&= [ 1 - \alpha_t(s_t, a_t) ] [ Q_t(s_t, a_t) - Q^*(s_t, a_t) ] \\
		&\quad \ + \alpha_t(s_t, a_t) F_t(s_t, a_t)
	\end{align*}
	where
	\begin{align*}
		F_t(s_t, s_t) &= y_t - Q^*(s_t, a_t) \\
		&= r_t + \gamma ( m_t - \Delta^{adj}_t ) - Q^*(s_t, a_t) \\
		&= r_t + \gamma ( m_t - \Delta^{adj}_t ) - Q^*(s_t, a_t) \\
		&\quad \ + \gamma Q_t(s_t, a_t) - \gamma Q_t(s_t, a_t) \\
		&= F^Q_t(s_t, a_t) + \gamma b_t
	\end{align*}
	We have split $F_t$ into two parts: a term from standard Q-learning, and $\gamma$ times another expression.
	\begin{align*}
	    F^Q_t(s_t, a_t) &= r_t + \gamma Q_t(s_t, a_t) - Q^*(s_t, a_t) \\
	    b_t &= m_t - \Delta^{adj}_t - Q_t(s_t, a_t)
	\end{align*}
	As it is well known that $\E\left[ F^Q_t | P_t \right] \leq \gamma\norm{\Delta_t}$, condition (3) of Lemma 1 holds if we can show $b_t$ converges to 0 with probability 1.
	Let $\Delta'_t = Q_t - Q'_t$. If $\Delta_t'(s_t, a_t) \to 0$ with probability 1, then
	\begin{equation*}
		m_t \to Q_t(s_t, a_t), \qquad \Delta^{adj}_t \to 0
	\end{equation*}
	so $b_t \to 0$.
	Therefore showing $\Delta'_t(s_t, a_t) \to 0$ proves that $b_t$ converges to 0.
	\begin{align*}
		\Delta'_{t+1}(s_t, a_t) &= Q_{t+1}(s_t, a_t) - Q'_{t+1}(s_t, a_t) \\
		&= [ 1 - \alpha_t(s_t, a_t) ] Q_t(s_t, a_t) + \alpha_t(s_t, a_t) y_t - \\
		&\quad \ ( [ 1 - \alpha_t(s_t, a_t) ] Q'_t(s_t, a_t) + \alpha_t(s_t, a_t) y_t ) \\
		&= [ 1 - \alpha_t(s_t, a_t) ] \Delta'_t(s_t, a_t)
	\end{align*}
	This clearly converges to 0. Hence $Q_t$ converges to $Q^*$ as $\Delta_t(s_t, a_t)$ converges to 0 with probability $1$ by Lemma 1.
	The convergence of $Q'_t$ follows from a similar argument, with the roles of $Q$ and $Q'$ reversed.
\end{proof}

\section{Continuous Control}

\begin{figure*}[h]
    \centering
    \subfigure[Test on random]{
    \includegraphics[width=0.3\textwidth]{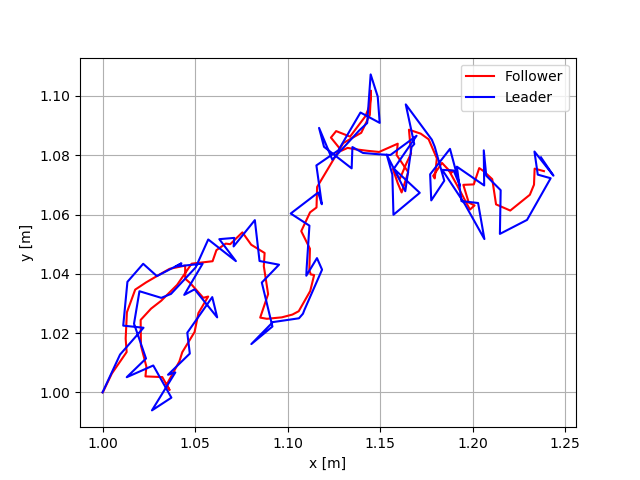}}
    \subfigure[Test on square]{
    \includegraphics[width=0.3\textwidth]{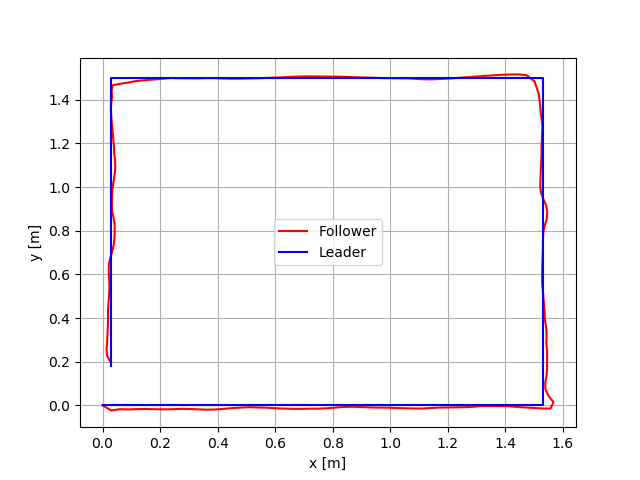}}
    \subfigure[Test on circle]{
    \includegraphics[width=0.3\textwidth]{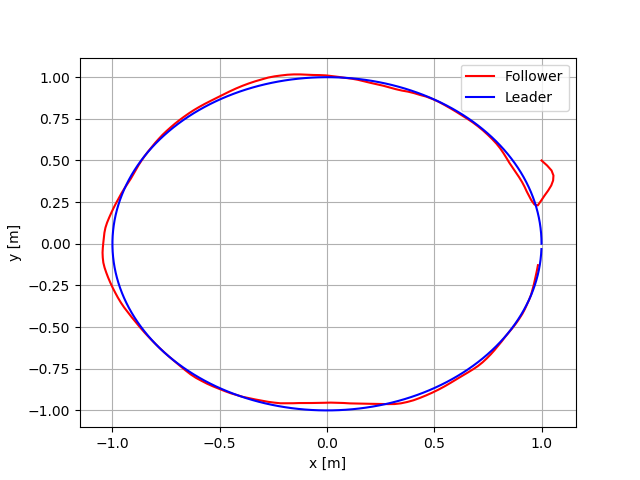}}
    \caption{Performance of a follower trained at 100 episodes with random leader on (a) random trajectory, (b) square leader, (c) circle leader.}
    \label{fig:ltperform}
\end{figure*}

We demonstrate MPG on a variety of leader-follower continuous control tasks.
In all simulations, neural networks have two hidden layers of $400$ and $300$ units respectively.
Output and input sizes vary depending on the environment and purpose of the model.
Constraints on the motion of robots (Table \ref{tab:kin_const}) are enforced by ending episodes once they are breached.
A penalty of $-k_b$ is also added to the reward for that time step.
The hyperparameters of MPG are given in Table~\ref{tab:hyper}.

\begin{table}[h]
\centering
    \caption{Kinematic Constraints of Agents}
    \label{tab:kin_const}
    \begin{tabular}{|c||c||c||c||c||c||c|}
        \hline
                 & $x_{min}$ & $x_{max}$ & $y_{min}$ & $y_{max}$ & $v_{min}$ & $v_{max}$ \\
        \hline
        Leader   & $-1$      & $1$       & $-1$      & $1$       & $-0.7$    & $0.7$     \\
        \hline
        Follower & $-2$      & $2$       & $-2$      & $2$       & $-0.7$    & $0.7$     \\
        \hline
    \end{tabular}
\end{table}

\begin{table}[h]
    \centering
    \caption{MPG Hyperparameters}
    \label{tab:hyper}
    \begin{tabular}{|l||c||c|}
        \hline
        \textbf{Hyper-parameter} & \textbf{Symbol} & \textbf{Value} \\
        \hline
        Actor Learning Rate      & $\alpha$        & $10^{-3}$      \\
        \hline
        Critic Learning Rate     & $\alpha_C$      & $10^{-2}$      \\
        \hline
        Batch Size               & $-$             & $16$           \\
        \hline
        Discount factor          & $\gamma$        & $0.99$         \\
        \hline
        Number of steps in each episode & $-$      & $200$          \\
        \hline
        Training noise variance  & $v_{train}$     & $0.2$          \\
        \hline
        Initial exploration noise variance & $v_{explore}$ & $2$    \\
        \hline
        Minimum exploration noise variance & $v_{min}$ & $0.01$     \\
        \hline
        Exploration noise variance decay rate & $\lambda$ & $0.99$  \\
        \hline
    \end{tabular}
\end{table}

Suppose there are $N$ agents whose kinematics are governed by Equations \ref{eqn:xydecomp} and \ref{eqn:xynonholo}.
Let $z_i = [\Dot{p}_i \ \ddot{p}_i]^\T$ denote the state of agent $i$ and the desired formation control for agents follow the constraint \cite{oh2015survey}:
\begin{equation}\label{eqn:fc}
    F(z) = F(z^*)
\end{equation}

\begin{definition}\label{def:fc}
    From (\ref{eqn:fc}), we consider displacement-based formation control with the updated constraint given as:
    \begin{equation}
        F(z) \coloneqq [...(z_j - z_i)^\T...]^\T = F(z^*)
    \end{equation}
\end{definition}

Each agent measures the position of other agents with respect to a global coordinate system.
However, absolute state measurements with respect to the global coordinate system is not needed.
A general assumption made for formation control communication is that all agent's position, trajectory or dynamics should be partially or fully observable \cite{Zegers.Chen.ea2019, han2019integrated}. 

The leader-follower tracking problem can be viewed as formation control with only $N = 2$ agents.

\subsection{Tracking}

We first test MPG by training a follower agent to track a leader whose position is known.
The leader constantly moves without waiting the follower.
The follower agent is punished based on its distance to the leader and rewarded for being very close.
Let $p_l$ be the 2D position of the leader and $p_f$ be the corresponding position for the follower.
The reward function for discrete leader movement is defined as 
\begin{equation}\label{eqn:wait}
    r = -\norm{p_l - p_f}
\end{equation}
where $\norm{\cdot}$ is the L2 norm.

Then, We train a follower to track a leader moving in a circular pattern.
Remarkably, this follower: can generalize to a scaled-up or scaled-down trajectory, is robust to perturbations in the initial starting location, and even tracks a leader with an elliptical motion pattern that it has never encountered before.
However, the follower fails to track a square leader which is caused by under exploration of the entire environment.
The under exploration issue is further highlighted when training a follower to track a leader with square motion.
The model learns to go straight along an edge in less than $10$ episodes, but fails to learn to turn a $90\degree$ angle for around $50$ episodes.

Agents trained against a random moving leader generalize well to regular patterns.
\begin{equation}
    v^l_x, v^l_y \sim \Normal(0, 1)
\end{equation}
This is very similar to the discrete courier task in \cite{mirowski2018learning}.
The random leader provides the follower with a richer set of possible movements compare to previous settings.
As shown in Figure \ref{fig:ltperform}, this allows the follower to track even more regular trajectories.
The follower is robust to changes in the initial position, as seen in Figure \ref{fig:ltperform}(c) where the follower starts outside the circle.
Average distance between the follower and the leader and the average reward are given in Table \ref{tab:leader-following}.

\begin{table}[h]
    \centering
    \caption{Random leader, Single Follower results.}
    \label{tab:leader-following}
    \begin{tabular}{|c||c||c|}
        \hline
               & Average Distance & Average Reward \\
        \hline
        Random & $0.0047$         & $-0.5027$      \\
        \hline
        Square & $0.0218$         & $-2.4007$      \\
        \hline
        Circle & $0.0394$         & $-4.3689$      \\
        \hline
    \end{tabular}
\end{table}

\begin{figure}[h]
    \centering
    \includegraphics[width=\linewidth]{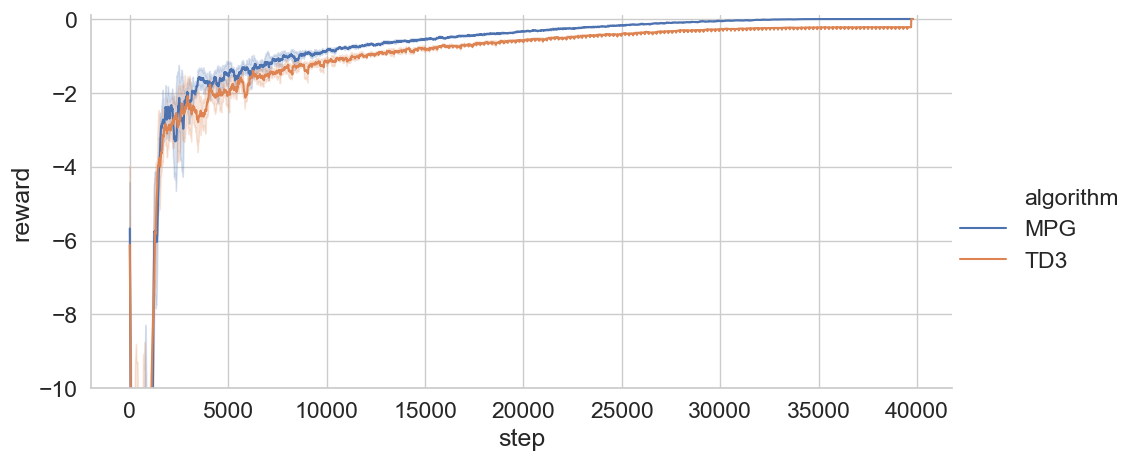}
    \caption{MPG vs. TD3 rewards during training for the circle leader-follower task.}
    \label{fig:reward-fl}
\end{figure}

\begin{figure}[h]
    \centering
    \subfigure[TD3]{
    \includegraphics[width=0.475\linewidth]{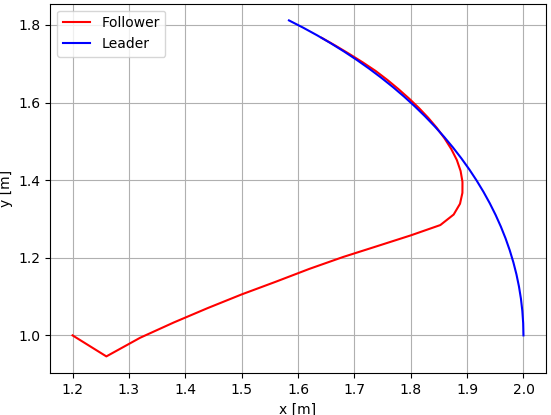}}
    \subfigure[MPG]{
    \includegraphics[width=0.475\linewidth]{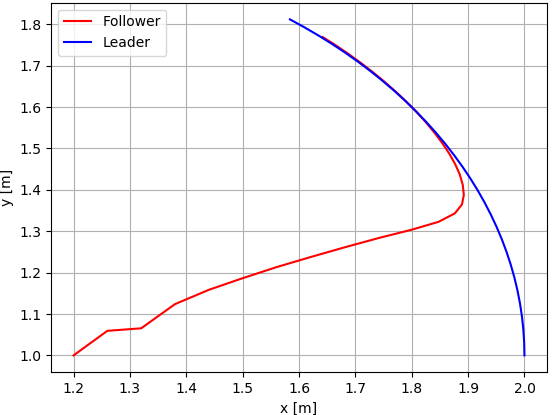}}
    \caption{Performance of TD3 and MPG followers trained with identical hyperparameters at 200 episodes.}
    \label{fig:TD3bad}
\end{figure}

For comparison, we also trained several followers using TD3.
An example reward is displayed in Figure \ref{fig:reward-fl}.
The values are smoothed using a 1D box filter of size 200.
For the circle leader task, TD3 which struggles to close the loop, slowly drifting away from the leader as time progresses.
The MPG trained agent does not suffer from this problem.

We also noticed that some TD3 trained followers do not move smoothly.
This is demonstrated in Figure \ref{fig:TD3bad}.
When trying to track a circle, these agents first dip down from $(1, 1.2)$ despite the leader moving counter-clockwise, starting at $(2, 1)$.

\subsection{Formation Control}

\begin{figure*}[h]
    \centering
    \subfigure[Rigid Unison with regular leader]{
    \includegraphics[width=0.3\textwidth]{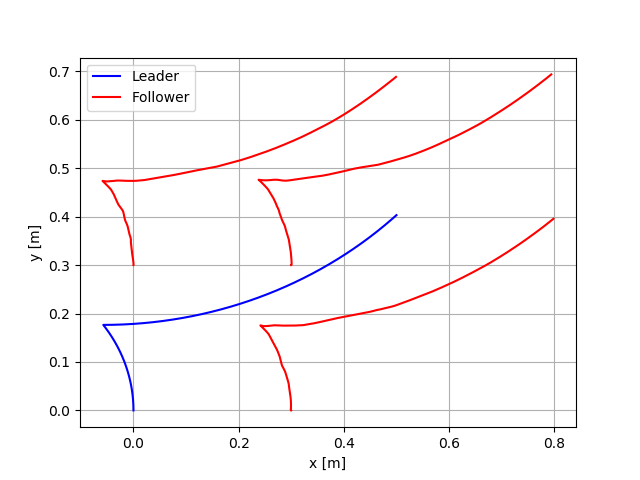}}
    \subfigure[Rigid Unison with random leader]{
    \includegraphics[width=0.3\textwidth]{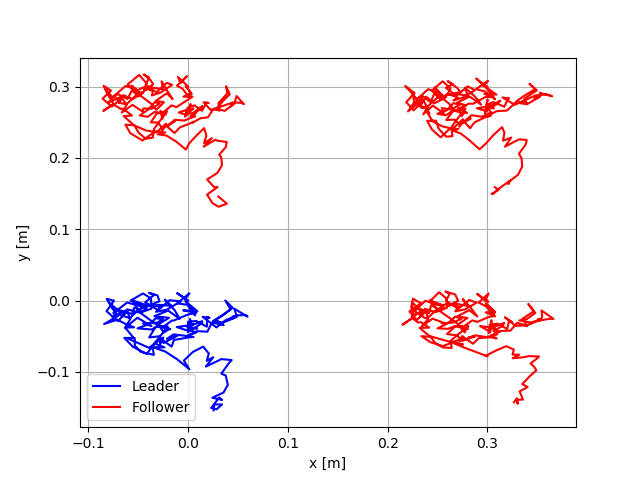}}
    \subfigure[Multi-agent Consensus]{
    \includegraphics[width=0.3\textwidth]{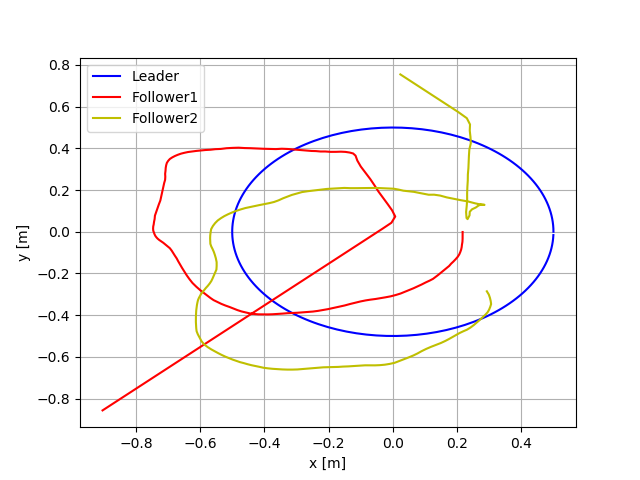}}
    \caption{Formation control: (a) and (b) use the unison definition while (c) allows the followers to swap positions.}
    \label{fig:fcperform}
\end{figure*}

We naturally extend from the tracking problem to displacement-based formation control by adding additional trained followers to track the sole leader.
However, additional work is needed to achieve collision avoidance among the followers.
Based on displacement-base formation control, we design and conduct two simulations similar to \cite{he2019leader}:
\begin{itemize}
    \item \textbf{Unison formation control}: There is a pre-defined formation $F$ with respect to global frame.
    All agents maintain a rigid formation throughout the entire movement process.
    The orientation of the formation does not change.
    \item \textbf{Consensus formation control}: Agents are given freedom to adapt rapidly while keeping the formation.
    The orientation of the formation can be changed.
\end{itemize}
In all simulations, a single neural network takes as input the positions of all the agents with respect to a global coordinate system and issues movement commands to all agents in the same coordinate system.

For unison formation control, groups of agents have a rigid formation that should not rotate.
Given the position of the leader, whose index is $1$, each follower $i$ should try to minimize its distance to an intended relative location $\mathcal{F}_i$ with respect to the leader while avoiding collisions.
As the leader moves, the expected positions $\mathcal{F}_i$ move in unison.
Let $C$ be the minimum safety distance, above which collision can be avoided.
The reward is
\begin{equation}
    r = -k_c n_c - \sum_{i \geq 1} \norm{p_i - \mathcal{F}_i}
\end{equation}
where $k_c$ is collision coefficient and $n_c$ is the number of collisions and $p_i$ is the position of agent $i$.
Upon any collisions, we reset the environment and start a new episode.

Then we explore unison formation control for a square formation with curved and random leader movements.
As seen in Figure \ref{fig:fcperform}, three followers move in unison equally well when tracking (a) a leader with smooth trajectory starting from lower left corner and (b) a random leader.
During training, we observed that adding more agents results in longer training time.
This is because adding an agent increases the state space and action space, and thus our input and output dimensions, by 2.
Training time grows approximately linearly in the number of agents.
This makes it intractable to train and deploy large formations in the hundreds or thousands of robots.
The average reward and distances are reported in Table \ref{tab:unison}.

\begin{table}[h]
    \centering
    \caption{Unison average reward and distances}
    \label{tab:unison}
    \begin{tabular}{|c||c||c||c||c|}
        \hline
        Pattern & Reward  & Dist. to $\mathcal{F}_2$ & Dist. to $\mathcal{F}_3$ & Dist. to $\mathcal{F}_4$ \\
        \hline
        Random  & $-0.7214$ & $0.0089$ & $0.0116$ & $-0.0134$ \\
        \hline
        Regular & $-0.5239$ & $0.0011$ & $0.0123$ & $-0.0026$ \\
        \hline
    \end{tabular}
\end{table}

\begin{table}[h]
    \centering
    \caption{Consensus average reward and distances}
    \label{tab:consensus}
    \begin{tabular}{|c||c||c||c|}
        \hline
        Reward & $\abs{d_{1, 2}-\mathcal{D}_{1,2}}$ & $\abs{d_{1, 3}-\mathcal{D}_{1,3})}$ & $\abs{d_{2, 3}-\mathcal{D}_{2,3}}$ \\
        \hline
        $-27.9942$ & $0.0219$   & $0.0203$     & $0.0275$     \\
        \hline
    \end{tabular}
\end{table}

Unlike unison formation control which dictates the individual motion of all agents, multi-agent consensus keeps a formation with respect to the local frame.
The formation can rotate with respect to the global frame.
The problem definition allows for switching and expansion within a given topology, as long as there are no collisions.
This can be beneficial.
For example, the agents may need to move further apart to avoid an obstacle or tighten their formation to fit through some passageway.
In general, agents $i, j$ should maintain a constant distance $\mathcal{D}_{i, j}$ to each other.
Letting $d_{i, j} = \norm{p_i - p_j}$, the reward function is given according to the following equation.
\begin{equation}\label{eqn:consensus_reward}
    r = -k_c n_c - \sum_{i, j} \abs{ d_{i, j} - \mathcal{D}_{i,j} \ }
\end{equation}

In our simulations, we trained $2$ follower agents to maintain triangle formation with a leader undergoing random motion.
As shown in Figure \ref{fig:fcperform} (c), we test the performance of the multi-agent consensus while having the leader traverse a counter-clockwise circular trajectory.
The leader starts at $(0.5,0.5)$, follower $1$ starts at a random lower left position, and follower $2$ starts at a random upper right position.
Initially, the three agents are very far away from each other but quickly formed a triangle when the leader reaches around $(0.1,0.5)$.
We observe that the followers swapped their local positions within the formation when the leader arrives $(-0.5, 0.0)$.
This is because the reward function is only interested in their relative distances, and the agents can maintain the formation with less total movement by switching their relative positions in the group.
Results are reported in Table \ref{tab:consensus}.

\begin{figure}[h]
    \centering
    \includegraphics[width=\linewidth]{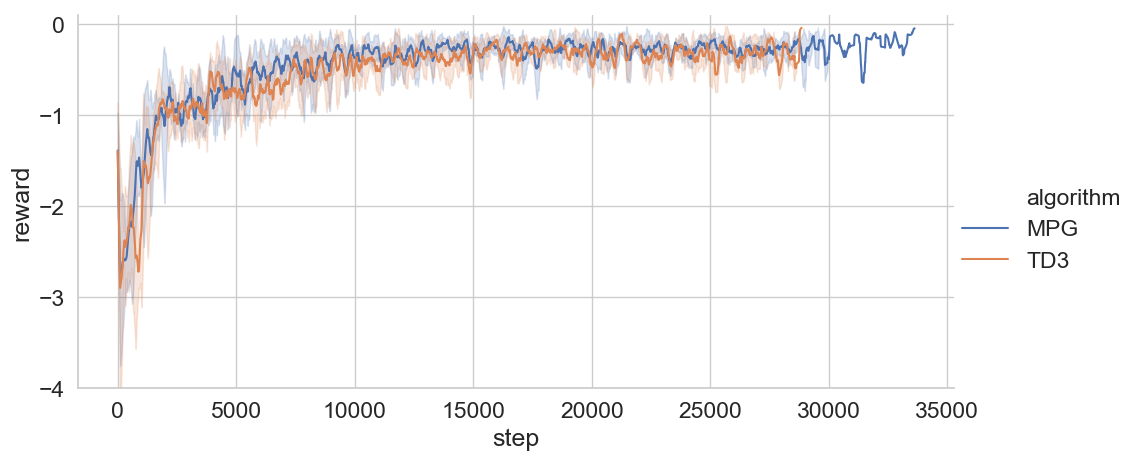}
    \caption{MPG vs. TD3 rewards during training for multi-agent consensus.}
    \label{fig:consensus_reward}
\end{figure}

For the purpose of comparison, an agent was also trained using TD3.
The rewards per time step are shown in Figure \ref{fig:consensus_reward}.
These were collected over 5 training runs of 200 episodes each.
The MPG curves are longer than the TD3 curves, because the MPG networks avoid episode ending collisions for longer.
Hence, MPG trained agents achieve the desired behavior sooner than the TD3 agents.

\subsection{Obstacle avoidance}

Based on the collision penalty $-k_c n_c$ embedded in Equation (\ref{eqn:consensus_reward}) of formation control, fixed or moving obstacle avoidance can naturally be integrated into the leader following or formation control problems.
Instead of control law redesign, obstacle avoidance can easily be achieved by adding an additional term in the reward function.

\begin{figure}[h]
    \centering
    \includegraphics[width=0.8\linewidth]{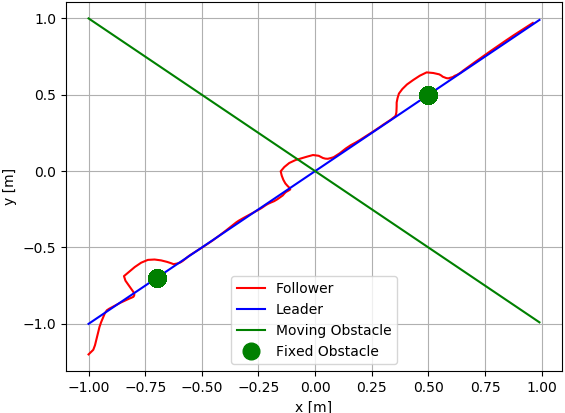}
    \caption{Follower avoiding the moving and fixed obstacles.}
    \label{fig:regulation}
\end{figure}

The simulation setup is illustrated in Figure \ref{fig:regulation}. 
The leader linearly travels from $(-1, -1)$ to $(1, 1)$.
We have $2$ fixed obstacles on the path of the leader that only stop the follower, and a moving obstacle travelling linearly from $(-1,1)$ to $(1,-1)$.
The follower, starting from a random location, must track the leader without hitting any obstacles.
The reward function is
\begin{equation}\label{eqn:reward_obs}
    r = - \norm{p_l - p_f} + k_o \sum_i \norm{p_f - o_i}
\end{equation}
where $o_i$ is the position of the obstacles and $k_o$ is the relative importance of avoiding the obstacles.

We use this reward, instead of a single penalty for colliding with an obstacle, because Equation (\ref{eqn:reward_obs}) gives constant feedback.
Training with sparse rewards is a big challenge in DRL \cite{salimans2018learning}.
In particular, because the obstacle is encountered later on, the follower learns to strictly copy the leader's movements without regard for the obstacles.
This is a local optimum that the agent fails to escape.
But for our purpose, it is not so important that the agent learns with even poorly shaped rewards.
The realism of the training settings is irrelevant as the agent is already in a artificial and simplified environment.

\section{Conclusion and Future Work}

In this paper, we propose a new DRL algorithm Momentum Policy Gradient to solve leader-follower tracking, formation control, and obstacle/collision avoidance problems, which are difficult to solve using traditional control methods.
These controllers can be trained in a simple toy environment, and then plugged into a larger modular framework.
The results show that MPG performs well in training agents to tackle a variety of continuous control tasks.
Image-based localization is achieved with a ResNet trained on a custom dataset.
Analysis demonstrates that the model can reliably predict a WMR's pose even when there are dynamic foreground/background, lighting, and view.

These methods are computationally inexpensive.
On a M40 Nvidia GPU and Intel Xeon E5 processor, MPG only takes, at most, $2$ hours to fully converge.
For localization, the CNN model took about $30$ hours to finish training on a dataset of over $9$ GB of images.
This is because the ResNet-50 model has $50$ residual blocks, compared to the $2$ layers in our DRL agents.

In the future, we would like to apply MPG to a wider variety of robotics problems.
In particular, we believe our current framework can generalize to larger indoor and outdoor settings.
Some work has already been done in this field \cite{bruce2018learning, bruce2017one}, but there are still major challenges that have not been solved.
One of these is data collection.
Currently, we require many samples to train our localization module.
However, using the techniques of few/one-shot learning \cite{bruce2017one} and data augmentation \cite{bruce2018learning}, it will no longer be necessary to collect so many samples at different times of day.
This opens up the possibility of online learning as a robot explores a new area of the environment.

\bibliographystyle{ieeetr}
\bibliography{IEEEabrv}

\begin{thebibliography}{10}

\bibitem{dixon2000robust}
W.~Dixon, D.~Dawson, E.~Zergeroglu, and F.~Zhang, ``Robust tracking and
  regulation control for mobile robots,'' {\em International Journal of Robust
  and Nonlinear Control: IFAC-Affiliated Journal}, vol.~10, no.~4,
  pp.~199--216, 2000.

\bibitem{ren2008consensus}
W.~Ren, ``Consensus tracking under directed interaction topologies: Algorithms
  and experiments,'' in {\em 2008 American Control Conference}, pp.~742--747,
  IEEE, 2008.

\bibitem{han2019integrated}
Z.~Han, K.~Guo, L.~Xie, and Z.~Lin, ``Integrated relative localization and
  leader--follower formation control,'' {\em IEEE Transactions on Automatic
  Control}, vol.~64, no.~1, pp.~20--34, 2019.

\bibitem{piasco2018survey}
N.~Piasco, D.~Sidib{\'e}, C.~Demonceaux, and V.~Gouet-Brunet, ``A survey on
  visual-based localization: On the benefit of heterogeneous data,'' {\em
  Pattern Recognition}, vol.~74, pp.~90--109, 2018.

\bibitem{kendall2015posenet}
A.~Kendall, M.~Grimes, and R.~Cipolla, ``Posenet: A convolutional network for
  real-time 6-dof camera relocalization,'' in {\em Proceedings of the IEEE
  international conference on computer vision}, pp.~2938--2946, 2015.

\bibitem{kendall2016modelling}
A.~Kendall and R.~Cipolla, ``Modelling uncertainty in deep learning for camera
  relocalization,'' in {\em 2016 IEEE international conference on Robotics and
  Automation (ICRA)}, pp.~4762--4769, IEEE, 2016.

\bibitem{li2018indoor}
R.~Li, Q.~Liu, J.~Gui, D.~Gu, and H.~Hu, ``Indoor relocalization in challenging
  environments with dual-stream convolutional neural networks,'' {\em IEEE
  Transactions on Automation Science and Engineering}, vol.~15, no.~2,
  pp.~651--662, 2018.

\bibitem{lin2018deep}
Y.~Lin, Z.~Liu, J.~Huang, C.~Wang, G.~Du, J.~Bai, S.~Lian, and B.~Huang, ``Deep
  global-relative networks for end-to-end 6-dof visual localization and
  odometry,'' {\em arXiv preprint arXiv:1812.07869}, 2018.

\bibitem{mutz2017following}
F.~Mutz, V.~Cardoso, T.~Teixeira, L.~F. Jesus, M.~A. Gol{\c{c}}alves,
  R.~Guidolini, J.~Oliveira, C.~Badue, and A.~F. De~Souza, ``Following the
  leader using a tracking system based on pre-trained deep neural networks,''
  in {\em Neural Networks (IJCNN), 2017 International Joint Conference on},
  pp.~4332--4339, IEEE, 2017.

\bibitem{olfati2004flocking}
R.~Olfati-Saber, ``Flocking for multi-agent dynamic systems: Algorithms and
  theory,'' tech. rep., CALIFORNIA INST OF TECH PASADENA CONTROL AND DYNAMICAL
  SYSTEMS, 2004.

\bibitem{chin1993path}
J.-H. Chin and H.-C. Tsai, ``A path algorithm for robotic machining,'' {\em
  Robotics and computer-integrated manufacturing}, vol.~10, no.~3,
  pp.~185--198, 1993.

\bibitem{dixon2004adaptive}
W.~E. Dixon, M.~S. de~Queiroz, D.~M. Dawson, and T.~J. Flynn, ``Adaptive
  tracking and regulation of a wheeled mobile robot with controller/update law
  modularity,'' {\em IEEE Transactions on control systems technology}, vol.~12,
  no.~1, pp.~138--147, 2004.

\bibitem{fibla2010allostatic}
M.~S. Fibla, U.~Bernardet, and P.~F. Verschure, ``Allostatic control for robot
  behaviour regulation: An extension to path planning,'' in {\em 2010 IEEE/RSJ
  International Conference on Intelligent Robots and Systems}, pp.~1935--1942,
  IEEE, 2010.

\bibitem{hong2008distributed}
Y.~Hong, G.~Chen, and L.~Bushnell, ``Distributed observers design for
  leader-following control of multi-agent networks,'' {\em Automatica},
  vol.~44, no.~3, pp.~846--850, 2008.

\bibitem{ge2018survey}
X.~Ge, Q.-L. Han, D.~Ding, X.-M. Zhang, and B.~Ning, ``A survey on recent
  advances in distributed sampled-data cooperative control of multi-agent
  systems,'' {\em Neurocomputing}, vol.~275, pp.~1684--1701, 2018.

\bibitem{bruce2018learning}
J.~Bruce, N.~S{\"u}nderhauf, P.~Mirowski, R.~Hadsell, and M.~Milford,
  ``Learning deployable navigation policies at kilometer scale from a single
  traversal,'' {\em arXiv preprint arXiv:1807.05211}, 2018.

\bibitem{liu2018formation}
Q.~Liu and Q.~Hui, ``The formation control of mobile autonomous multi-agent
  systems using deep reinforcement learning,'' {\em 13th Annual IEEE
  International Systems Conference}, 2018.

\bibitem{krajnik2018navigation}
T.~Krajn{\'\i}k, F.~Majer, L.~Halodov{\'a}, and T.~Vintr, ``Navigation without
  localisation: reliable teach and repeat based on the convergence theorem,''
  in {\em 2018 IEEE/RSJ International Conference on Intelligent Robots and
  Systems (IROS)}, pp.~1657--1664, IEEE, 2018.

\bibitem{DBLP:journals/corr/abs-1809-10124}
H.~L. Chiang, A.~Faust, M.~Fiser, and A.~Francis, ``Learning navigation
  behaviors end to end,'' {\em CoRR}, vol.~abs/1809.10124, 2018.

\bibitem{DBLP:journals/corr/HeZRS15}
K.~He, X.~Zhang, S.~Ren, and J.~Sun, ``Deep residual learning for image
  recognition,'' {\em CoRR}, vol.~abs/1512.03385, 2015.

\bibitem{fujimoto2018addressing}
S.~Fujimoto, H.~van Hoof, and D.~Meger, ``Addressing function approximation
  error in actor-critic methods,'' {\em arXiv preprint arXiv:1802.09477}, 2018.

\bibitem{ilon1975wheels}
B.~E. Ilon, ``Wheels for a course stable selfpropelling vehicle movable in any
  desired direction on the ground or some other base,'' Apr.~8 1975.
\newblock US Patent 3,876,255.

\bibitem{bryant2006geometry}
R.~L. Bryant, ``Geometry of manifolds with special holonomy,'' {\em 150 Years
  of Mathematics at Washington University in St. Louis: Sesquicentennial of
  Mathematics at Washington University, October 3-5, 2003, Washington
  University, St. Louis, Missouri}, vol.~395, p.~29, 2006.

\bibitem{weiner1990asymmetric}
J.~Weiner, ``Asymmetric competition in plant populations,'' {\em Trends in
  ecology \& evolution}, vol.~5, no.~11, pp.~360--364, 1990.

\bibitem{dudek2010computational}
G.~Dudek and M.~Jenkin, {\em Computational principles of mobile robotics}.
\newblock Cambridge university press, 2010.

\bibitem{lee2018amid}
N.~Lee, S.~Ahn, and D.~Han, ``Amid: Accurate magnetic indoor localization using
  deep learning,'' {\em Sensors}, vol.~18, no.~5, p.~1598, 2018.

\bibitem{lillicrap2015continuous}
T.~P. Lillicrap, J.~J. Hunt, A.~Pritzel, N.~Heess, T.~Erez, Y.~Tassa,
  D.~Silver, and D.~Wierstra, ``Continuous control with deep reinforcement
  learning,'' {\em arXiv preprint arXiv:1509.02971}, 2015.

\bibitem{franccois2018introduction}
V.~Fran{\c{c}}ois-Lavet, P.~Henderson, R.~Islam, M.~G. Bellemare, J.~Pineau,
  {\em et~al.}, ``An introduction to deep reinforcement learning,'' {\em
  Foundations and Trends{\textregistered} in Machine Learning}, vol.~11,
  no.~3-4, pp.~219--354, 2018.

\bibitem{zhang2015deep}
S.~Zhang, A.~E. Choromanska, and Y.~LeCun, ``Deep learning with elastic
  averaging sgd,'' in {\em Advances in Neural Information Processing Systems},
  pp.~685--693, 2015.

\bibitem{singh2000convergence}
S.~Singh, T.~Jaakkola, M.~Littleman, and C.~Szepesv{\'a}ri, ``Convergence
  results for single-step on-policyreinforcement-learning algorithms,'' {\em
  Machine Learning}, vol.~38, pp.~287--308, 2000.

\bibitem{oh2015survey}
K.-K. Oh, M.-C. Park, and H.-S. Ahn, ``A survey of multi-agent formation
  control,'' {\em Automatica}, vol.~53, pp.~424--440, 2015.

\bibitem{Zegers.Chen.ea2019}
F.~Zegers, H.-Y. Chen, P.~Deptula, and W.~E. Dixon, ``A switched systems
  approach to consensus of a distributed multi-agent system with intermittent
  communication,'' in {\em Proc. Am. Control Conf.}, 2019.

\bibitem{mirowski2018learning}
P.~Mirowski, M.~Grimes, M.~Malinowski, K.~M. Hermann, K.~Anderson,
  D.~Teplyashin, K.~Simonyan, A.~Zisserman, R.~Hadsell, {\em et~al.},
  ``Learning to navigate in cities without a map,'' in {\em Advances in Neural
  Information Processing Systems}, pp.~2424--2435, 2018.

\bibitem{he2019leader}
S.~He, M.~Wang, S.-L. Dai, and F.~Luo, ``Leader--follower formation control of
  usvs with prescribed performance and collision avoidance,'' {\em IEEE
  Transactions on Industrial Informatics}, vol.~15, no.~1, pp.~572--581, 2019.

\bibitem{salimans2018learning}
T.~Salimans and R.~Chen, ``Learning montezuma's revenge from a single
  demonstration,'' {\em arXiv preprint arXiv:1812.03381}, 2018.

\bibitem{bruce2017one}
J.~Bruce, N.~S{\"u}nderhauf, P.~Mirowski, R.~Hadsell, and M.~Milford,
  ``One-shot reinforcement learning for robot navigation with interactive
  replay,'' {\em arXiv preprint arXiv:1711.10137}, 2017.

\end{thebibliography}

\end{document}